\documentclass[letterpaper, 10 pt, conference]{ieeeconf}  

\IEEEoverridecommandlockouts                              
\usepackage{amsmath} 
\usepackage{amssymb}  

\usepackage{amsthm}
\usepackage{mathrsfs}

\PassOptionsToPackage{hyphens}{url}
\usepackage{hyperref}
\hypersetup{breaklinks=true}
\hypersetup{
    colorlinks = true,
    linkcolor = blue,
    urlcolor = blue,
    citecolor = magenta,
}

\usepackage{comment}
\usepackage{cite}

\usepackage{mathtools}

\theoremstyle{definition} 
\newtheorem{defn}{Definition}[section]
\newtheorem{rmk}[defn]{Remark}
\newtheorem{ex}[defn]{Example}

\theoremstyle{plain}
\newtheorem{thm}[defn]{Theorem}
\newtheorem{lem}[defn]{Lemma}

\usepackage{tikz-cd} 

\usepackage[normalem]{ulem}

\newcommand{\define}[1]{\textbf{#1}}

\newcommand{\R}{\mathbb{R}}
\newcommand{\Z}{\mathbb{Z}}

\newcommand{\calA}{\mathcal{A}}

\newcommand{\calS}{\mathcal{S}}

\newcommand{\True}{\top}
\newcommand{\Until}{\mathcal{U}}

\newcommand{\notltl}{\neg}

\newcommand{\Event}{\Diamond}
\newcommand{\Always}{\Box}

\title{\LARGE \bf
Stratifying Reinforcement Learning with Signal Temporal Logic
}

\author{Justin Curry and Alberto Speranzon
\thanks{J.~Curry is with the University at Albany, State University of New York, Department of Mathematics and Statistics, 1400 Washington Ave., Albany, NY 12202 USA.{Email: \url{ jmcurry@albany.edu}}}%
\thanks{A.~Speranzon is with Lockheed Martin, Advanced Technology Labs, 1303 Corporate Center Dr, Eagan, MN 55121 USA. {Email: \url{alberto.speranzon@lmco.com}}.}%
}

\begin{document}

\maketitle

\begin{abstract}
In this paper, we develop a stratification‑based semantics for Signal Temporal Logic (STL) in which each atomic predicate is interpreted as a membership test in a stratified space. This perspective reveals a novel correspondence principle between stratification theory and STL, showing that most STL formulas can be viewed as inducing a stratification of space‑time. The significance of this interpretation is twofold. First, it offers a fresh theoretical framework for analyzing the structure of the embedding space generated by deep reinforcement learning (DRL) and relates it to the geometry of the ambient decision space. Second, it provides a principled framework that both enables the reuse of existing high‑dimensional analysis tools and motivates the creation of novel computational techniques. To ground the theory, we (1) illustrate the role of stratification theory in Minigrid games and (2) apply numerical techniques to the latent embeddings of a DRL agent playing such a game where the robustness of STL formulas is used as the reward. In the process, we propose computationally efficient signatures that, based on preliminary evidence, appear promising for uncovering the stratification structure of such embedding spaces.
\end{abstract}

\section{Introduction}

Recent papers in machine learning~\cite{marchetti2025algebra,robinson2025token} have shown that the ``manifold hypothesis'' does not hold for many neural network embeddings; instead, these  spaces are \emph{stratified}, informally, they consist of manifolds of varying dimensions.
In this work we argue that such stratifications are inherited from (i) the stratified structure of the data (ambient) domain and (ii) the loss (or reward) function used during training. We focus on decision problems, specifically Minigrid games, where the reward is the robustness of a Signal Temporal Logic (STL) predicate and a deep reinforcement learning (DRL) agent is trained to learn winning policies.

At a high level, single agent decision tasks generate directed state transition graphs that naturally form posets: the agent must progress toward goal states while respecting temporal, state space, or adversarial constraints. We illustrate the utility of poset‑based stratifications with three increasingly complex game examples.
The second part of the paper links these ideas to control theory and modern neural‑network architectures. We introduce a network that uses STL robustness as a reward, thereby imposing a goal‑oriented stratification on the ambient space. Numerical experiments demonstrate how we can analyze stratified spaces, both ambient and high dimensional embedding spaces, highlighting promising directions and current limitations.

The relationship between local/intrinsic dimension of modern machine‑learning embeddings and stratification theory has recently attracted attention from the topological data analysis community. Prior work has examined this link for deep network architectures~\cite{nurisso2026topology,valeriani2023geometry}, reinforcement learning~\cite{catanzaro2024topological}, image processing~\cite{brown2022verifying}, and language models~\cite{ruppik2025less} (see references therein).  
However, to the best of our knowledge, no study has yet explored how the stratification of the ambient (data) space induces a corresponding stratification in the latent space, nor how reward functions defined as STL‑robustness values shape the decision‑space.  
We believe this paper opens new research avenues that bridge control theory, machine learning, and algebraic topology. 

The paper is organized as follows. Section~\ref{sec:Stratification_Theory_Games} defines poset stratified spaces, presents game‑based examples, and reviews numerical tools for stratification analysis. 
Section~\ref{sec:STL_RL} recasts STL semantics in terms of semi‑algebraic sets and defines predicates via ``rug functions''~\cite{durfee1983}.
STL is then examined through tubular neighborhoods and stratification theory. Section~\ref{sec:Stratification_DRL} reports preliminary numerical results on the stratification of latent spaces produced by a DRL agent. 

\section{Stratification Theory for Minigrid Games}
\label{sec:Stratification_Theory_Games}

\begin{figure}[t!]
  \centering
  \includegraphics[width=0.85\columnwidth]{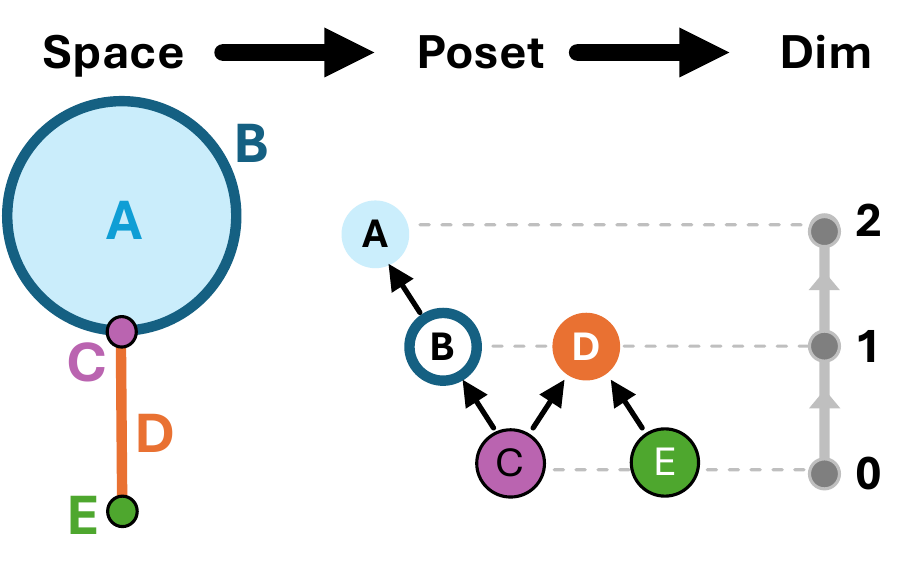}
  \vspace*{-0.20in}
  \caption{A disk with a line attached can be stratified via the face-relation poset, which refines the dimension poset $0 < 1 < 2$.}
  \label{fig:space-poset-dim}
\end{figure}

Historically, stratification theory was developed to describe topological spaces that are ``built out'' of manifolds.
More precisely, the standard definition of a stratified space $X$ specifies a sequence of closed subsets
\[
\varnothing = X_{-1} \subseteq X_{0} \subseteq \cdots \subseteq X_n = X
\]
with the property that each $X^i \coloneqq X_i \setminus X_{i-1}$ is a (possibly empty) $i$-dimensional manifold.
More recently~\cite{waas2024}, mathematicians have adopted the following language:

\begin{defn}
    Let $X$ be a topological space and let $(P,\leq)$ be a partially ordered set (a poset).
    Equip $P$ with the \emph{Alexandrov topology} where down-sets are closed, i.e., whenever $D\subseteq P$ satisfies $y\in D$ and $x\leq y \Rightarrow x\in D$, then we consider $D$ to be a closed set. A \define{poset stratification} of $X$ is then a continuous map $q:X\to P$. If additionally each $X^p \coloneqq q^{-1}(p)$ is a manifold, then we say $q$ is a poset stratification by manifolds.
\end{defn}


The historical definition of a stratified space can be viewed as a special case of a poset-stratified space $q:X \to P$ where $P=\Z$ is the integers and $q$ is the dimension function.
One of the most basic reasons to consider more general posets is to better understand how different parts of a stratified space are pieced together.
For example, Figure~\ref{fig:space-poset-dim} shows how a disk with a line attached can be stratified via the \emph{face-relation poset} $\mathbf{Face}(X)$, which is a poset that records boundary relations.
In Figure~\ref{fig:space-poset-dim} the space $X$ consists of a two-dimensional (open) disk A, whose boundary (or face) is the one-dimensional circular arc B and the point C. The one-dimensional edge D has another boundary point E. 
This poset has an order-preserving map to the dimension poset  $0 < 1 < 2$.

In this paper, we are interested in using the full generality of poset stratifications to model spaces typically outside the realm of traditional topology, focusing on RL games as a special case.


\subsection{Digital Stratifications for Games}
\label{subsec:digital-strats}


For our first class of examples, we consider 2D Minigrid environments \cite{minigrid2023}, which provide popular benchmarks for reinforcement learning (RL) algorithms.
Minigrid games offer mission instructions that instruct an agent, indicated by the red triangle, to navigate to particular regions, like a green square, possibly by first accomplishing certain sub-tasks, like collecting a key and then unlocking a door.

\begin{figure}[t!]
  \centering
  \includegraphics[width=0.8\columnwidth]{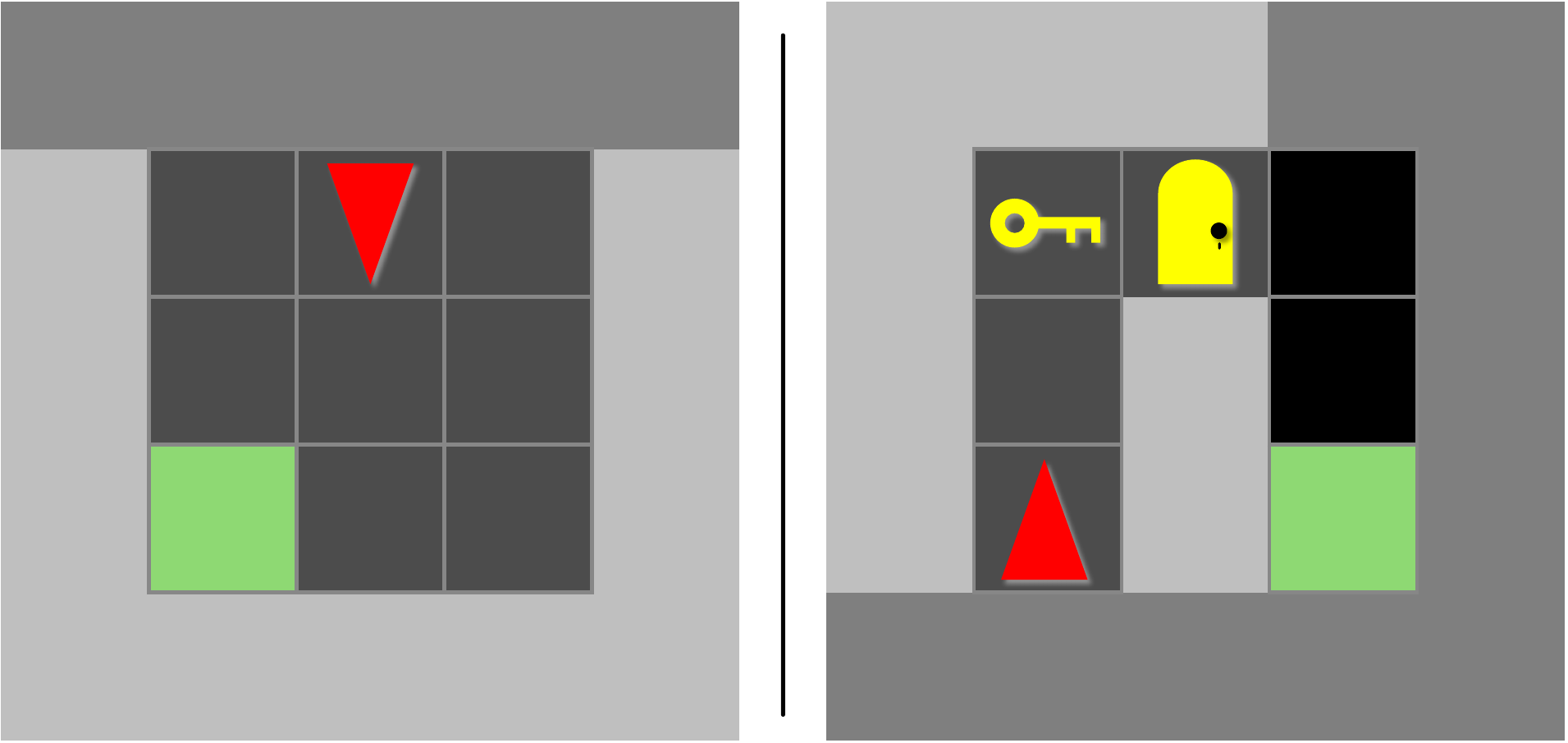}
  \caption{Empty (left) and Door-Key (right) Minigrid Environments \cite{minigrid2023}}
  \label{fig:mini-side}
\end{figure}

In Figure~\ref{fig:mini-side}, two different examples of Minigrid games are shown.
The empty environment (left) where the agent's goal is to navigate to the green square without any obstacles, whereas the door-key example (right) shows a environment where the agent must first collect a key to then open a door, before navigating to the green square.
We can formalize each minigrid game as Markov Decision Process (MDP).
\begin{defn}
    A deterministic \define{Markov Decision Process} (MDP) consists of a set of states $\mathcal{S}$, a set of actions $\mathcal{A}$, a transition function $T:\calS\times \calA \to \calS$, and a reward function $R:\calS\times \calA \times \calS \to \R$. A \define{policy} is an assignment $\Pi: \calS \to \calA$ that specifies what action to take in any given state.
    A policy then specifies a discrete flow $\Phi:\calS\to\calS$ via the composition $\Phi(s) \coloneqq T(s,\Pi(s))$.
    One can then use this flow to specify discrete time trajectories $\gamma_{s}=(s,\Phi(s),\Phi^2(s),\ldots)$
    passing through any given state $s$.
\end{defn}

\begin{ex}\label{ex:rotation-game}
    For the empty minigrid environment (Figure~\ref{fig:mini-side} left), the set of states $\calS$ consists of all $(\text{row},\text{column})$ positions in the $3\times 3$ grid $G$, along with a cardinal direction $C=\{N,E,S,W\}$, indicating the orientation of the red triangle (agent), so $\calS=G\times C$ is a 36-element set. 
    The state of the agent in the left of Figure~\ref{fig:mini-side} is $s=(1,2,S)$ indicating that the agent is in row-1 and column-2 pointing South.
    The set of actions $\calA=\{L,R,F,D\}$ described in \cite{minigrid2023} correspond to turn left, turn right, go forward, and do nothing (or done).
\end{ex}

In Example \ref{ex:rotation-game}, the green square corresponds to any state of the form  $(3,1,\star)$ for $\star\in \{N,E,S,W\}$.
Since all these states can be viewed as equivalent goal states, we consider a simplified version of the minigrid game with no rotations.
    
\begin{ex}\label{ex:no-rotation-game}
    Consider a modified, no rotation, version of the minigrid environment, where $\calS=G$ is the $3 \times 3$ grid and the action space is $\calA=\{Up,Down,Left,Right,None\}$ translates an agent vertically, horizontally, or not at all. A policy $\Pi$ that routes the agent from a state $s\in \calS$ to the goal state $g=(3,1)$ is equivalent to picking a spanning tree indicated in the state graph, an example shown in Figure~\ref{fig:mini-tree}.
\end{ex}


\begin{figure}[t!]
  \centering
  \includegraphics[width=0.8\columnwidth]{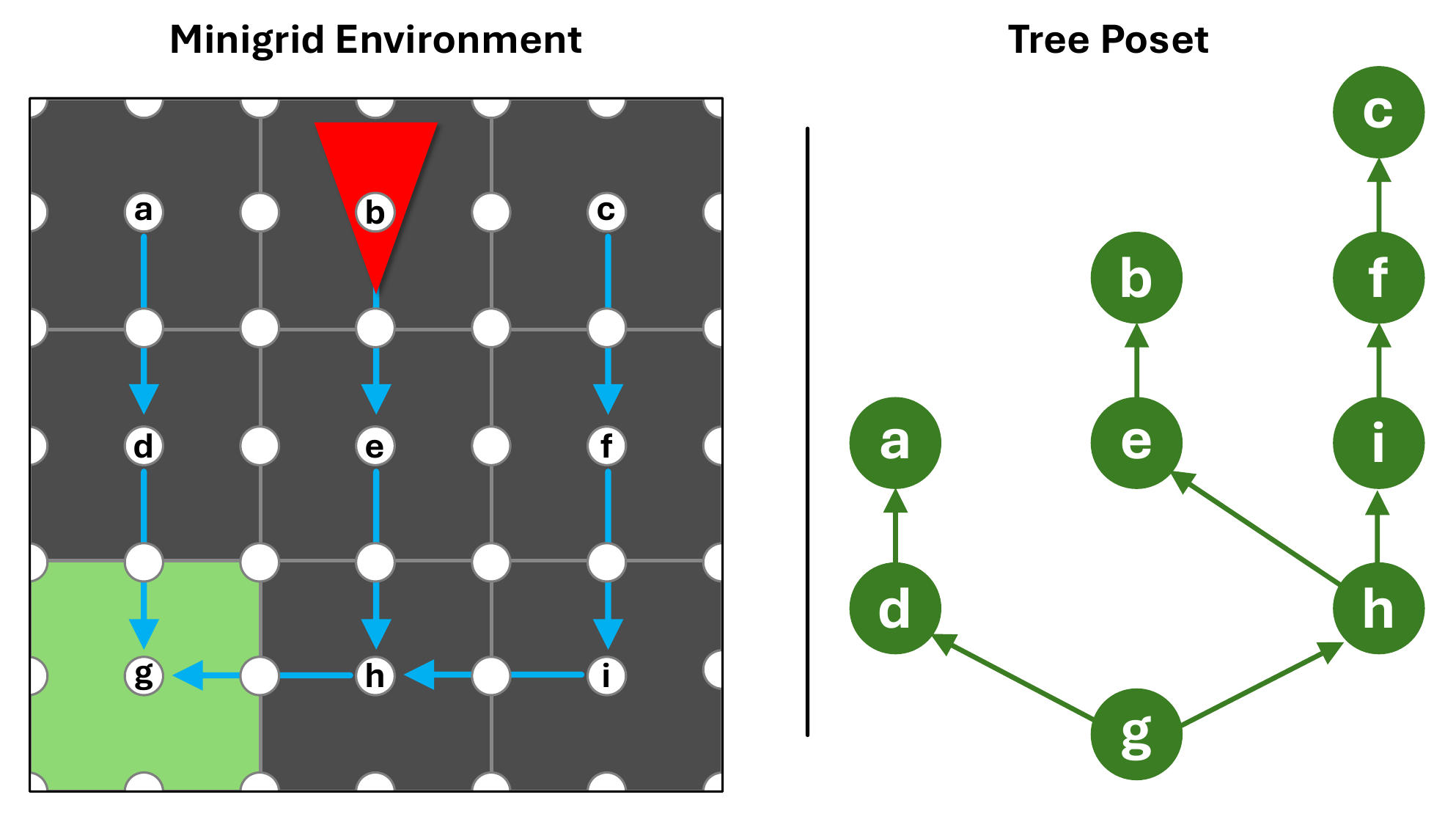}
  \caption{An empty $3\times 3$ Minigrid Environment is first translated into a cell complex with 49 cells (9 two-dimensional cells, labeled \emph{a} through \emph{i}, 24 edges, and 16 vertices), which can be viewed as a finite ``digitized'' topological space with 49 points. Such a topologized minigrid environment has a continuous map to the tree poset displayed right.}
  \label{fig:mini-tree}
\end{figure}

Formally, we can show that minigrid games where we ignore the cardinal direction of the agent can be viewed as a tree-stratified state space, where the tree is akin to the policy tree $T$ for a state graph.
There is one caveat that we must first ``digitize'' our state space to match the implied Euclidean topology on our video game screen, which is explained in the proof of the following lemma.

\begin{lem}\label{lem:tree-strat}
    For any minigrid game where rotations are ignored, as in Example \ref{ex:no-rotation-game}, and with a single goal state $g$, any policy that generates successful trajectories passing through arbitrary start states $s$, can be viewed as defining a poset-stratified space with indexing poset a tree.
\end{lem}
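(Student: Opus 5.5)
The plan is to build the tree poset directly from the policy, then digitize the grid into a finite topological space, and finally check that a natural map from that space to the tree is continuous for the Alexandrov topologies.

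\textbf{Step 1: the tree.} Fix a policy $\Pi$ such that, from every start state $s\in\calS=G$, the trajectory $\gamma_s$ reaches $g$. Define a directed graph on $G$ with an edge $s\to\Phi(s)$ for each $s\neq g$. Every trajectory terminates at $g$, so this graph has no cycles other than the fixed point at $g$. Each non-goal vertex has exactly one outgoing edge. Hence it is a spanning tree rooted at $g$, as in Example~\ref{ex:no-rotation-game}.

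Order $G$ by declaring $s\leq t$ when $t=\Phi^k(s)$ for some $k\geq 0$, or the reverse convention; I would fix whichever one makes the map in Step 3 continuous. Reflexivity and transitivity are immediate. Antisymmetry follows from acyclicity. Call this poset $T$. It is a tree poset in the sense that every element has a unique chain to the root $g$.

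\textbf{Step 2: digitization.} Replace the grid by the cell complex of Figure~\ref{fig:mini-tree}, which has 9 open squares, 24 edges and 16 vertices. Regard it as a 49-point space $X$ whose open sets are the up-sets of the face poset $\mathbf{Face}(X)$. Concretely, a cell $\sigma$ lies in the closure of $\tau$ iff $\sigma$ is a face of $\tau$. This is the finite model of the Euclidean topology on the screen. By the Alexandrov correspondence, a map $q:X\to P$ of posets is continuous exactly when it is order-preserving, with faces below cofaces in $\mathbf{Face}(X)$ and down-sets closed in $P$.

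\textbf{Step 3: the map.} Define $q$ on the 2-cells by sending square $a,\dots,i$ to its state in $T$. Extend $q$ to the lower-dimensional cells so that monotonicity holds: each edge or vertex must map to an element lying below the images of all squares containing it.

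The main obstacle is that $T$ need not have meets. Two adjacent squares that are not related in the tree have no common lower bound unless we enlarge the tree. I would handle this in one of two ways:
\begin{itemize}
\item adjoin a bottom element $\bot$ and send every such shared face to $\bot$, which keeps the poset a tree (now rooted at $\bot$ after reversing the order, or a tree with a minimum); or
\item better, refine the tree by subdividing. A shared edge between $s$ and $\Phi(s)$, which are adjacent because policy moves are single steps, maps to a new node inserted on the tree edge $s\to\Phi(s)$ below both. Every other face, including edges between non-consecutive squares and all vertices, maps to $\bot$.
\end{itemize}

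The resulting indexing poset is still a tree: a subdivided rooted tree with a pendant minimum, or a tree once we take the Hasse diagram with $\bot$. Checking monotonicity then reduces to a short case analysis over faces: vertices below edges below squares, and the images respect the order by construction.

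Continuity follows from the Alexandrov correspondence, so $q:X\to P$ is a poset stratification with $P$ a tree. Moreover, each stratum over a square node is a single open cell, a digital 2-manifold, which matches the picture in Figure~\ref{fig:mini-tree}.
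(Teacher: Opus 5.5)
There is a genuine gap: the enlarged poset you end up with is not a tree. The obstacle you identify exists only in the orientation $s\le\Phi(s)$, with $g$ at the top. In that orientation any two squares already have a common upper bound in $T$, namely the node where their trajectories merge. A poset in which two incomparable elements have both a common upper bound $m$ and a common lower bound $\bot$ cannot have a tree as Hasse diagram: the two saturated chains from $\bot$ to $m$, one through each element, are distinct paths and so form a cycle.

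Your construction sends \emph{all} vertices to $\bot$, and every square has corner vertices. So $\bot$ lies below every element of $T$ and is covered by every leaf. Unless the policy tree is a single path, $T\cup\{\bot\}$ is therefore not a tree, and the claim ``the resulting indexing poset is still a tree'' fails. The subdivision variant does not repair this. A node inserted on the Hasse edge between $s$ and $\Phi(s)$ lies strictly between them, not below both; it is also unnecessary, since $s$ is already the meet of $s$ and $\Phi(s)$. Meanwhile all remaining faces still go to $\bot$.

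The missing observation is that the ``reverse convention'' you left open removes the obstacle entirely. Declare $\Phi(s)\le s$, so that $g$ is the minimum of $T$. A rooted tree ordered with its root at the bottom is a meet-semilattice: the greatest lower bound of any set of squares is the first node at which all of their trajectories have merged.

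Now send each edge and vertex of the cell complex to the meet of the images of the squares containing it. If a vertex $v$ is a face of an edge $e$, every square containing $e$ also contains $v$. Hence $q(v)$ is a meet over a larger set, giving $q(v)\le q(e)\le q(\sigma)$ for every square $\sigma$ containing $e$. So $q$ is order-preserving, hence continuous for the Alexandrov topologies, with the tree $T$ itself as indexing poset and no $\bot$ needed.

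This is exactly the paper's proof (``assign the greatest lower bound of the cells above it''). The paper then also composes with the collapse map from the geometric cell complex to the 49-point space, which stratifies the screen itself. With this construction a stratum over a square node generally contains some edges and vertices too, so your closing remark about strata being single open cells should be dropped; the lemma does not need it.
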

\begin{proof}
    As MDPs do not typically specify a topology we must specify one to define a stratification.
    We accomplish this by viewing our minigrid environment as a cell complex $X$, with a 2-cell (open disk) for each tile in the grid, a 1-cell (an edge) for the boundary between two tiles, and a 0-cell (vertex) for each of the four corners of a pixel. 
    As already explained, this defines a poset $\mathbf{Face}(X)$ where vertices are less than adjacent edges, which are less than adjacent 2-cells.
    See Figure \ref{fig:mini-tree}, where the points of this poset are indicated in white.
    The poset $\mathbf{Face}(X)$ can be topologized so that down-sets are closed, which we regard as the \define{digital} version $X_{dig}$ of the space $X$. Note that there is always a continuous map from a cell complex $X\to X_{dig}$ by collapsing all points in the same cell to a single representative point.
    From $X_{dig}$ we define the map $q:X_{dig}\to T$ to the policy tree by mapping each 2-cell to a unique point in the policy tree.
    For lower dimensional cells, we assign the greatest lower bound (in the tree poset $T$) of the cells above it in the poset $\mathbf{Face}(X)$. This ensures that $q:\mathbf{Face}(X)\to T$ is a poset map, which is automatically continuous with respect to the Alexandrov topology. The stratification of $X$ is then the composition $X\to X_{dig}\to T$.
\end{proof}

\begin{rmk}
    The prohibition on rotation is important when we view MDPs as topological spaces and request continuity of policies. Here, one can think of policies as generalized vector fields.
    We remind the reader that the Poincar\'e-Hopf Index Theorem \cite[Ch. 3, \S 5]{guillemin2025differential} places strong topological limitations on continuous vector fields. 
    If a state space is topologically equivalent to a circle $\mathbb{S}^1$, then any sink (the goal state) must have an accompanying source (unstable fixed point of the policy).
\end{rmk}

\subsection{Stratifying Space-Time ``Coin Collection'' Games}
\label{subsec:coin_strat}

\begin{figure}[t!]
  \centering
  \includegraphics[width=\columnwidth]{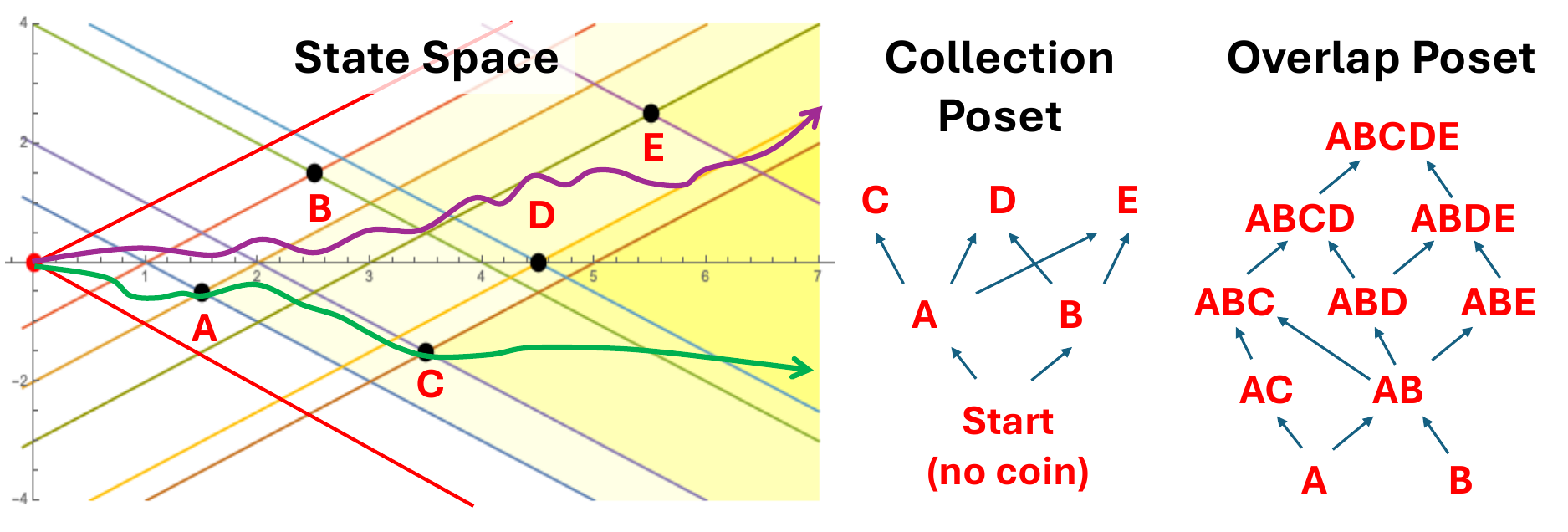}
  \caption{Stratifying a ``Coin Collection'' Game inspired by \cite{baryshnikov2023linear}.}
  \label{fig:coin-strat}
\end{figure}

Moving beyond minigrid games, we turn our attention to a coin collecting game that was inspired by dualizing an obstacle avoidance game considered in \cite{baryshnikov2023linear}.
This game also indicates how space-time considerations can be encoded in an (inherently directed) poset $P$.
In Figure~\ref{fig:coin-strat}, the state space $\mathcal{S}=\{(t,y) \in \R_{\geq 0} \times \R \mid |y|\leq t\}$ $\R_{\geq 0}\times Y$ encodes time as the $x$-coordinate and position as the $y$-coordinate.
The agent's speed is limited in such a way so that $|\dot{y}(t)|\leq 1$ everywhere.
This means that an agent starting at $S=(0,0)$ cannot navigate outside of the red ``light cone'' drawn in Figure~\ref{fig:coin-strat}.
The goal of this game is to locate resources, ``coins,'' that appear at 5 different instants in space-time, labeled $A$ through $E$, without violating the velocity constraints.
After some thought, the reader should convince themselves that at most two coin collections are possible. 
This is, for example, witnessed by the green trajectory in Figure~\ref{fig:coin-strat}, which passes through the coins $A$ and $C$.

As evidenced by the proof of Lemma \ref{lem:tree-strat}, finding a poset stratification is easiest when our state space can be viewed as a poset on its own. Previously, that was accomplished by using a cell structure on the state space, but here $\mathcal{S}$ can naturally be viewed as a poset under the relation $(t,y)\leq (t',y')$ iff $t\leq t'$ and $|y'-y|\leq t'-t$.
Principal up-sets, i.e., sets of the form $U_p:=\{p'\mid p\leq p'\}$ correspond exactly to future light-cones in the poset $\mathcal{S}$.
The \define{collection poset} $\mathcal{C}$, drawn in the middle of Figure~\ref{fig:coin-strat}, is the opposite\footnote{In general the association of a point $p\in P$ with its principal up-set $U_p$ is order-reversing.} of the inclusion poset for the future light cones $U_A, U_B, U_C, U_D, U_E$.
Unfortunately, $\mathcal{S}$ is \emph{not} stratified by the collection poset $\mathcal{C}$, drawn in the middle of Figure~\ref{fig:coin-strat}.
This is because a point right above $D$, say $p=(4.5,1)$ satisfies $A\leq p$ and $B\leq p$, but we cannot choose either of those targets in $\mathcal{C}$ without violating the order-preserving condition.

The answer is to pass to the larger \define{overlap poset} $\mathcal{O}$, which records the maximal depth intersection among the light cones. Indeed if $\sigma\subseteq \{S,A,B,C,D,E\}=:I$ and we write $U_{\sigma}:=\cap_{i\in \sigma} U_i$ for short then
\[
\mathcal{O}:=\{\sigma\subseteq I\mid U_{\sigma} \neq \varnothing \text{ and } \nexists \tau\supset \sigma \text{ with } U_{\sigma}=U_{\tau}\}.
\]
Since every point in $\mathcal{S}$ is in $U_S$, we can map every $p\in \mathcal{S}$ to a point in $\mathcal{O}$ and this map is order-preserving.
Note that we have suppressed that extra letter $S$ from the Hasse diagram for $\mathcal{O}$, drawn right in Figure~\ref{fig:coin-strat}, so $\mathcal{O}$ actually has 11 points, and not just the 10 shown.

\subsection{Stratifying Trajectory Space}
\label{subsec:strat_traj}

From an RL perspective, stratifying the state space is not the only object of interest.
Rather, one wants to stratify the \emph{trajectory space}, especially when the state space does not support an obvious poset structure.
To that end, we will prove that the trajectory space for our coin collection game is actually stratified by the \define{order complex} of the collection poset, written $K(\mathcal{C})$, whose elements are chains of comparable elements in $\mathcal{C}$ ordered by reverse inclusion.
We put the empty chain at the top of $K(\mathcal{C})$, then all of the length 0-chains below that, with the length 2 chains at the bottom.
We do this to emphasize that passing through no distinguished state is the generic behavior, where as passing through multiple distinguished states is harder ``low-dimensional'' behavior.
We make this precise with the following theorem.

\begin{thm}\label{thm:traj-strat}
    Suppose $T\subset X$ is a closed set of ``target values.'' Let $C^0(\mathbb{R},X)$ denote the space of continuous maps from $\R$ to $X$, i.e., trajectories.
    Let $\mathcal{Z}$ be the poset of closed subsets of $\R$, ordered by reverse inclusion.
    Then
    \[
    q_T : C^0(\mathbb{R},X) \to \mathcal{Z}:=\mathbf{Closed}(\R)^{op} \qquad \gamma \mapsto \gamma^{-1}(T)
    \]
    is a poset stratification.
\end{thm}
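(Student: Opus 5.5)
Since $\mathcal{Z}$ carries the Alexandrov topology in which down-sets are closed, its open sets are exactly the up-sets. Because the order is reverse inclusion, an up-set is a family $\mathcal{U}\subseteq \mathbf{Closed}(\R)$ with the following property: if $A\in\mathcal{U}$ and $B\subseteq A$ is closed, then $B\in\mathcal{U}$. Every up-set is a union of principal up-sets
\[
\uparrow\! A=\{B\in\mathbf{Closed}(\R)\mid B\subseteq A\}.
\]
So I plan to reduce continuity of $q_T$ to a pointwise statement. It suffices to show that for every $\gamma$, the set
\[
N_\gamma:=q_T^{-1}\bigl(\uparrow\! \gamma^{-1}(T)\bigr)=\{\gamma'\mid \gamma'^{-1}(T)\subseteq \gamma^{-1}(T)\}
\]
is a neighborhood of $\gamma$. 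Indeed, an arbitrary open $\mathcal{U}$ satisfies $q_T^{-1}(\mathcal{U})=\bigcup_{\gamma\in q_T^{-1}(\mathcal{U})}N_\gamma$, because $\gamma^{-1}(T)\in\mathcal{U}$ forces $\uparrow\!\gamma^{-1}(T)\subseteq\mathcal{U}$. The first small step is to confirm that $q_T$ lands in $\mathcal{Z}$. This holds because $T$ is closed and $\gamma$ is continuous, so $\gamma^{-1}(T)$ is closed.

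Next I rewrite $N_\gamma$ as a mapping-space condition. Put $A_\gamma:=\R\setminus\gamma^{-1}(T)$, an open subset of $\R$, and put $W:=X\setminus T$, an open subset of $X$. Then
\[
N_\gamma=\{\gamma'\mid \gamma'(A_\gamma)\subseteq W\},
\]
and it contains $\gamma$. For any compact $K\subseteq A_\gamma$, the compact-open subbasic set $\{\gamma'\mid\gamma'(K)\subseteq W\}$ is open and contains $\gamma$. Exhausting $A_\gamma$ by compacts $K_1\subseteq K_2\subseteq\cdots$ gives $N_\gamma=\bigcap_n\{\gamma'\mid\gamma'(K_n)\subseteq W\}$.

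The main obstacle is exactly this infinite intersection. When $A_\gamma$ is not compact, the compact-open topology does not make $N_\gamma$ open. Near the frontier of $\gamma^{-1}(T)$, $\gamma(t)$ approaches $T$, and uniformly small perturbations can create new hitting times. I therefore plan to make explicit the topology on $C^0(\R,X)$ under which the statement is read. Take the topology generated by the compact-open subbasis together with the sets
\[
\langle A,W\rangle=\{\gamma\mid \gamma(A)\subseteq W\}
\]
for $A\subseteq\R$ arbitrary and $W\subseteq X$ open. This is the topology of pointwise-set avoidance, the natural ``hitting'' topology. With it, $N_\gamma=\langle A_\gamma, X\setminus T\rangle$ is open by definition, so by the reduction above $q_T$ is continuous, hence a poset stratification. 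I will add a remark that this is the coarsest refinement of the compact-open topology for which all the maps $q_T$ are continuous. For trajectories whose complement $A_\gamma$ is compact, or on a compact time domain, the argument goes through with the compact-open topology alone.
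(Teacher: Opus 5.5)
Your route differs from the paper's, and your diagnosis is the sharper one. The paper uses pointwise convergence and shows only that $q_T^{-1}(D_S)=F_{S,T}=\{\gamma\mid\gamma^{-1}(T)\supseteq S\}$ is closed for each \emph{principal} down-set $D_S$. Since $\mathcal{Z}$ is infinite, a general Alexandrov-closed set is an arbitrary union $\bigcup_{S\in D}D_S$. Arbitrary unions of closed sets need not be closed, so this check does not give continuity. You test the correct criterion: principal up-sets must pull back to neighborhoods. The obstruction you found is real for every standard topology. Take $X=\R$, $T=\{0\}$, $\gamma(t)=t$ and $\gamma_n(t)=t+1/n$. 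Then $\gamma_n\to\gamma$ uniformly, hence pointwise and in the compact-open topology. Yet $q_T(\gamma_n)=\{-1/n\}$ never lies in the open up-set $\{\varnothing,\{0\}\}$, which contains $q_T(\gamma)=\{0\}$. So the statement fails in the pointwise topology the paper uses. The paper's argument actually proves continuity only into the coarser topology on $\mathcal{Z}$ generated by declaring the $D_S$ closed. That amounts to a semicontinuity fact: the trajectories hitting $T$ at every time of $S$ form a closed set.

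Consequently you are proving a corrected statement, not the stated one. You should present it that way, rather than as the topology ``under which the statement is read.'' In your topology continuity holds by construction, since $\langle\R\setminus A,\,X\setminus T\rangle$ is exactly $q_T^{-1}(\uparrow\! A)$. Your coarsest-refinement remark is right. This is simply the initial topology of the family $\{q_T\}$. It already contains the compact-open topology because $\langle K,W\rangle=\bigcup_{U\supseteq K\ \mathrm{open}}\langle U,W\rangle$.

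The two repairs trade off differently. Yours keeps the Alexandrov target and so gives a genuine poset stratification. The cost is a non-standard trajectory space in which uniformly convergent sequences, such as $\gamma_n$ above, need not converge. The paper's implicit repair keeps the natural topology on trajectories. The cost there is that it yields only semicontinuity, not a stratification in the sense of the paper's own definition.

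Finally, drop your last sentence. $A_\gamma$ is open in $\R$, so it is compact only when it is empty. On a compact time domain such as $[0,1]$, it is compact only if $\gamma$ misses $T$ entirely or stays in $T$. Otherwise the same shift example ($T=\{0\}$, $\gamma(t)=t-\tfrac12$, $\gamma_n=\gamma+1/n$) shows that the compact-open topology still fails.
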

\begin{proof}
    First the map $q_T$ is well-defined because if $\gamma$ is continuous, $Z:=\gamma^{-t}(T)$ is closed so $q_T(\gamma)=Z\in \mathcal{Z}$ is well-defined.
    To understand why we need to use the opposite containment relation on $\mathbf{Closed}(\R)$, recall that closedness in $C^0(\mathbb{R},X)$ is defined in terms of sequential convergence, i.e, $\{\gamma_n\}\to \gamma$ if for every $t\in \R$ $\gamma_n(t)\to \gamma(t)$.
    Now if $\{\gamma_n(t)\}$ is a sequence of points in our target set $T$, then since $T$ is closed the limit $\gamma(t)\in T$ is closed as well.
    More formally, this observation proves that for any closed set $S\in \mathbf{Closed}(\R)$ the set of functions $F_{S,T}:=\{\gamma\mid \gamma^{-1}(T)\supseteq S\}$ is closed.
    This is precisely inverse image of the down-set $D_S:=\{Z\in \mathbf{Closed}(\R) \mid Z\supseteq S$ under $q_T$, which proves continuity of the stratification map.
\end{proof}

\subsection{Stratification Detection}
\label{subsec:stratification_dectection}


\begin{figure}[t!]
    \centering
    \includegraphics[width=0.75\columnwidth]{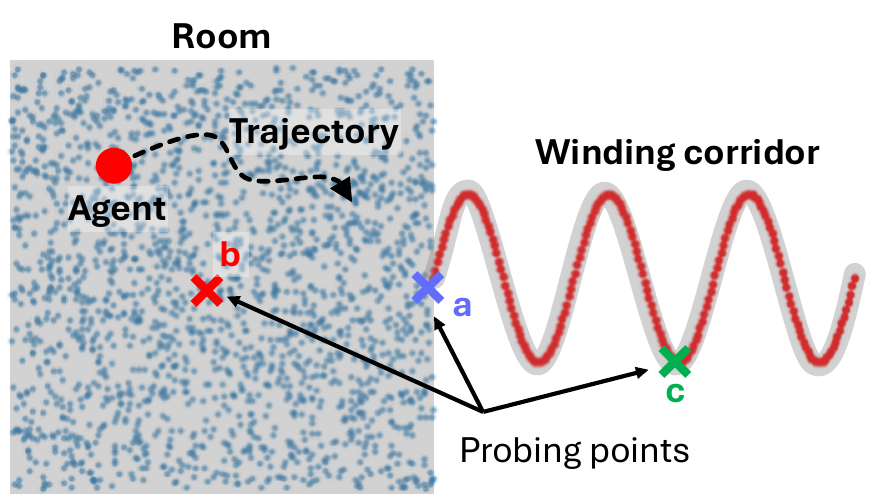}
    \caption{Example of an environment where an agent can move within a room or traverse a winding corridor. Dots are agent's position samples.}   
    \label{fig:room_corridor}
\end{figure}

Although the previous sub-sections have illustrated how the abstract machinery of poset stratifications can be useful for decomposing MDPs into different interesting regions, we now turn our attention to developing numerical methods for detecting stratifications automatically, especially in the latent representations of neural networks tasked to solve RL games.
Here we return to the classical theory of stratified spaces, where the defining feature are regions of varying dimensionality, cf.~Figure~\ref{fig:space-poset-dim}. 
To ground our numerical methods for later experiments, we first consider a ``noisy'' variation on Figure~\ref{fig:space-poset-dim}, shown in Figure~\ref{fig:room_corridor}, which we call the \emph{room with corridor} stratified space.
This example differs slightly from Figure~\ref{fig:space-poset-dim} because here the stratifying poset will have six distinct minimal elements: the four corners of the room and the two endpoints of the curvy corridor.
We note that each of these points will have slightly different volume growth laws, which we now define.

\begin{defn}
    Let $(X, d,\mu)$ be a metric measure space, where $\mu$ is the Borel measure on the space $X$ and $d:X \times X\to \mathbb{R}_{\geq 0}$ is a distance function. The \define{Volume Growth Transform (VGT)} associated to a point $x\in X$ is the log-log volume growth curve
    \[
        \mathrm{VGT}_x(s) \coloneq \log \mu(B_x(r=e^s))\,,
    \]
    where $B_x(r) = \{y \in X | d(x,y) < r\}$ is the ball of radius $r$.
\end{defn}

Although several techniques have been proposed for estimating local (intrinsic) dimensionality~\cite{mindml2012,twonn2017,fishers2019}, even under idealized uniform sampling conditions, some estimators tend to over-estimate the dimension in certain regions while others under-estimate it, as illustrated in Figure~\ref{fig:room_corridor_locdim}.
Consequently, although variations in the estimated local dimension are informative for detecting stratifications, the absolute values of those estimates should be treated carefully.

The bottom‑left panel of Figure ~\ref{fig:room_corridor_locdim} shows the local dimension estimated with the Volume Growth Transform (VGT)\footnote{The name Volume Growth Transform, or VGT for short, is not standard and was first proposed in~\cite{curry2025exploring}.}. First proposed in~\cite{gionis2005dimension} for clustering, the VGT was later used by~\cite{robinson2025token} to probe the latent geometry of large language models and subsequently adapted to the DRL setting in~\cite{curry2025exploring}.

\begin{figure}[t!]
    \centering
    \includegraphics[width=\columnwidth]{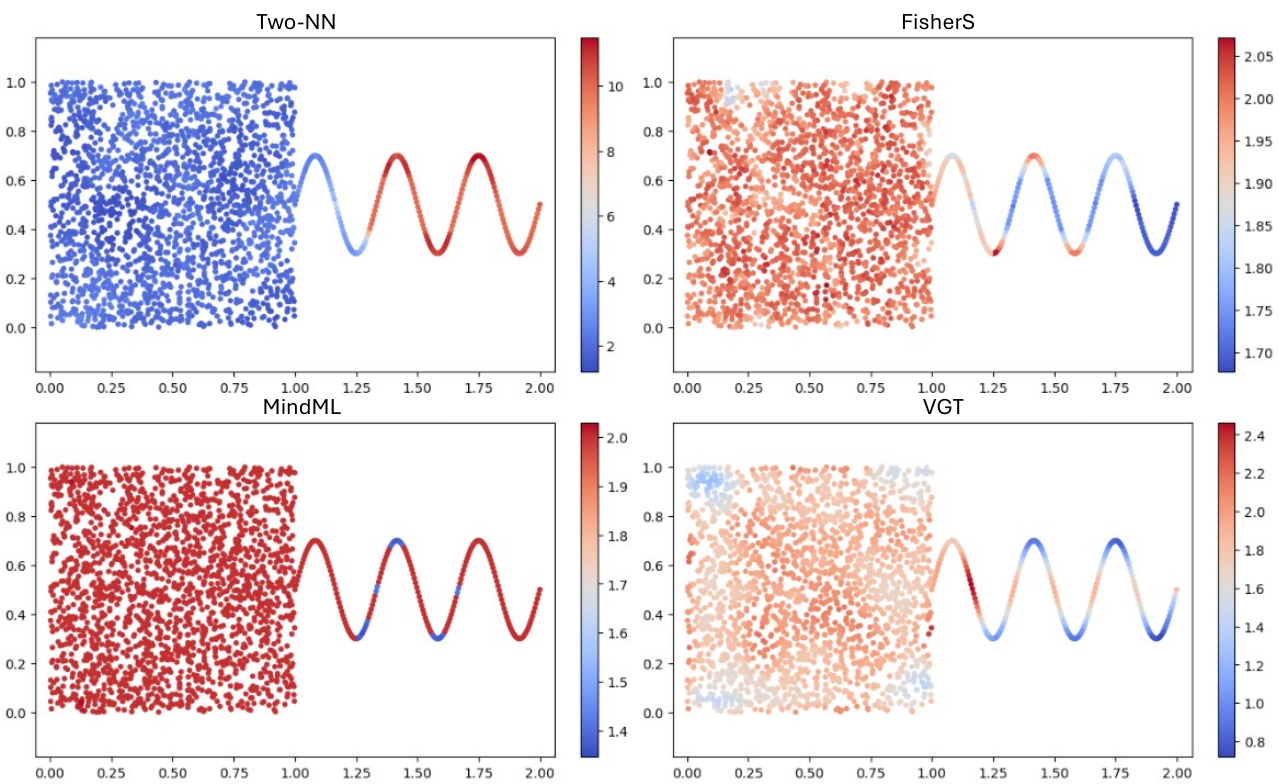}
    \caption{Local dimensions from various algorithms using \textsf{sklearn} library. Top-left: \cite{twonn2017}, top-right: \cite{fishers2019}, bottom-left: \cite{mindml2012} and bottom-right: \cite{curry2025exploring}.}
    \label{fig:room_corridor_locdim}
\end{figure}

Under small $r$ approximation and the assumption that the underlying space is a manifold, the logarithm of the volume $v_x$ of $B_x(r)$ scales linearly with the logarithm of the radius
\[
    \log v_x \approx K + n_x \log r\,,
\]
where $n_x$ is the local dimension at $x$. Thus estimates of $n_x$ can be computed by solving a linear least squares problem. 
Estimates for the environment in Figure~\ref{fig:room_corridor} are shown on the bottom-right of Figure~\ref{fig:room_corridor_locdim}.
If the space were a smooth manifold without boundaries, the estimated local dimension, $n_x$ would be (approximately) constant everywhere. 
In a stratified space such as Figure~\ref{fig:room_corridor}, the dimension should drop from 2 in the room to 1 in the corridor. The VGT captures this trend, although regions with high curvature (near the peaks of the sine wave) underestimate the dimensionality.

\begin{figure}[t!]
    \centering
    \includegraphics[width=0.8\columnwidth]{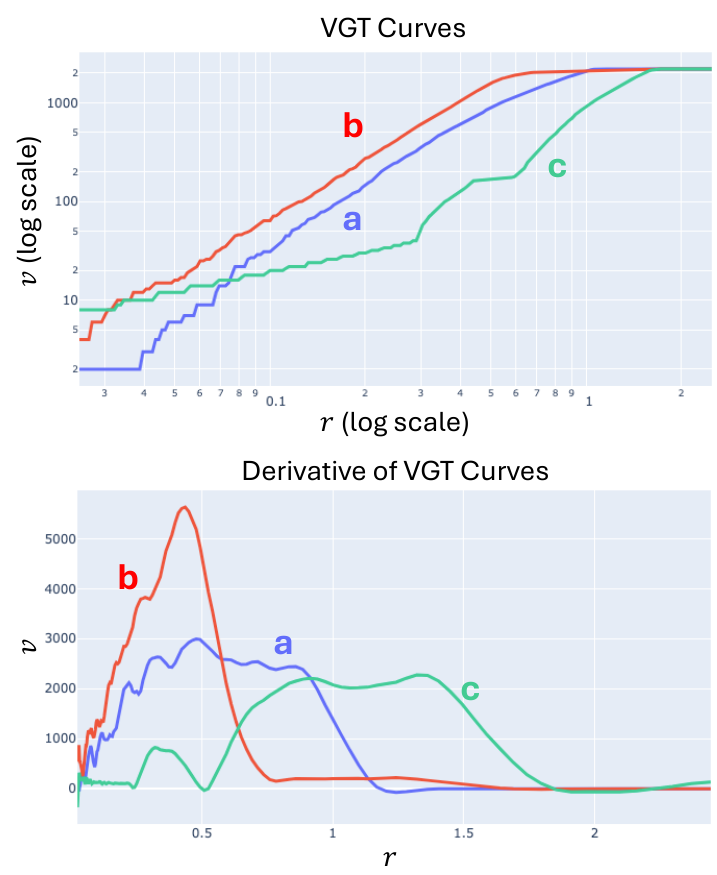}
    \caption{Top -- VGT curves for three points in the environment in Figure~\ref{fig:room_corridor}: connection point between room and corridor, point \textsf{a} (blue), center of the room, point \textsf{b} (red), middle minimum of the corridor point \textsf{c} (green). Bottom -- Smoothed derivative of the VGT curves. }
    \label{fig:VGT_curves}
\end{figure}

The VGT becomes especially informative when we examine the full scale of the curves, as shown in Figure~\ref{fig:VGT_curves}(top) for the three probe points marked in Figure~\ref{fig:room_corridor}. Both the blue and red curves have a slope of 2 (the blue curve, associated with \textsf{a}, is subtly different), while the green curve starts with a slope of 1. Around $r\approx0.3$ the green curve's slope increases as the ball $B_x(r)$ begins to capture volume from the adjacent room.
In this paper, we introduce the derivative of the (smoothed) VGT, \define{VGT‑dot}, as a feature for revealing the local‑to‑global stratifications found in a space.

\begin{rmk}
It is important to note that the green curve has a non-trivial piecewise linear behavior that follows the type of signatures characterized in~\cite[Theorem 1]{curry2025exploring} for stratified spaces. This provides further evidence of the stratified structure of the space.
\end{rmk}

Note that the VGT at the singular point \textsf{a} yields a slope near 2, illustrating how such critical points can be ``missed'' by local‑dimension estimates. Figure~\ref{fig:DIC_HADES_VGT-dot} compares several algorithms on (i) the room‑corridor layout of Fig.~\ref{fig:room_corridor} and (ii) an “hourglass'' space whose singularity sits at the junction of the two lobes, a configuration that is related to the DRL experiments in Section~\ref{sec:Stratification_DRL}. 
In particular, we compare~\cite{lim2025hades} (top), ~\cite{gionis2005dimension} (middle), and the proposed VGT-dot (bottom).
These observations will inform the analysis of the much higher-dimensional embedding space of a DRL agent.

There are several important considerations.  
First, HADES directly detects singularities and yields very accurate boundary and singular‑point identification for the two spaces shown in Figure~\ref{fig:DIC_HADES_VGT-dot}.  
Second, the DIC and VGT‑dot methods provide signatures via clustering.  DIC successfully clusters the room's boundary and the rim of the hourglass, but it misses the singularity at the hourglass's ``neck.''  
In contrast, VGT‑dot offers a more informative local‑to‑global clustering of the space.  
Finally, HADES localises singularities effectively but does not scale well to high‑dimensional data.  In contrast, DIC and VGT‑dot rely on linear regressions, so they scale readily, albeit at the expense of only coarse boundary‑ and singular‑point estimates.  

\begin{figure}[t!]
    \centering
    \includegraphics[width=0.90\columnwidth]{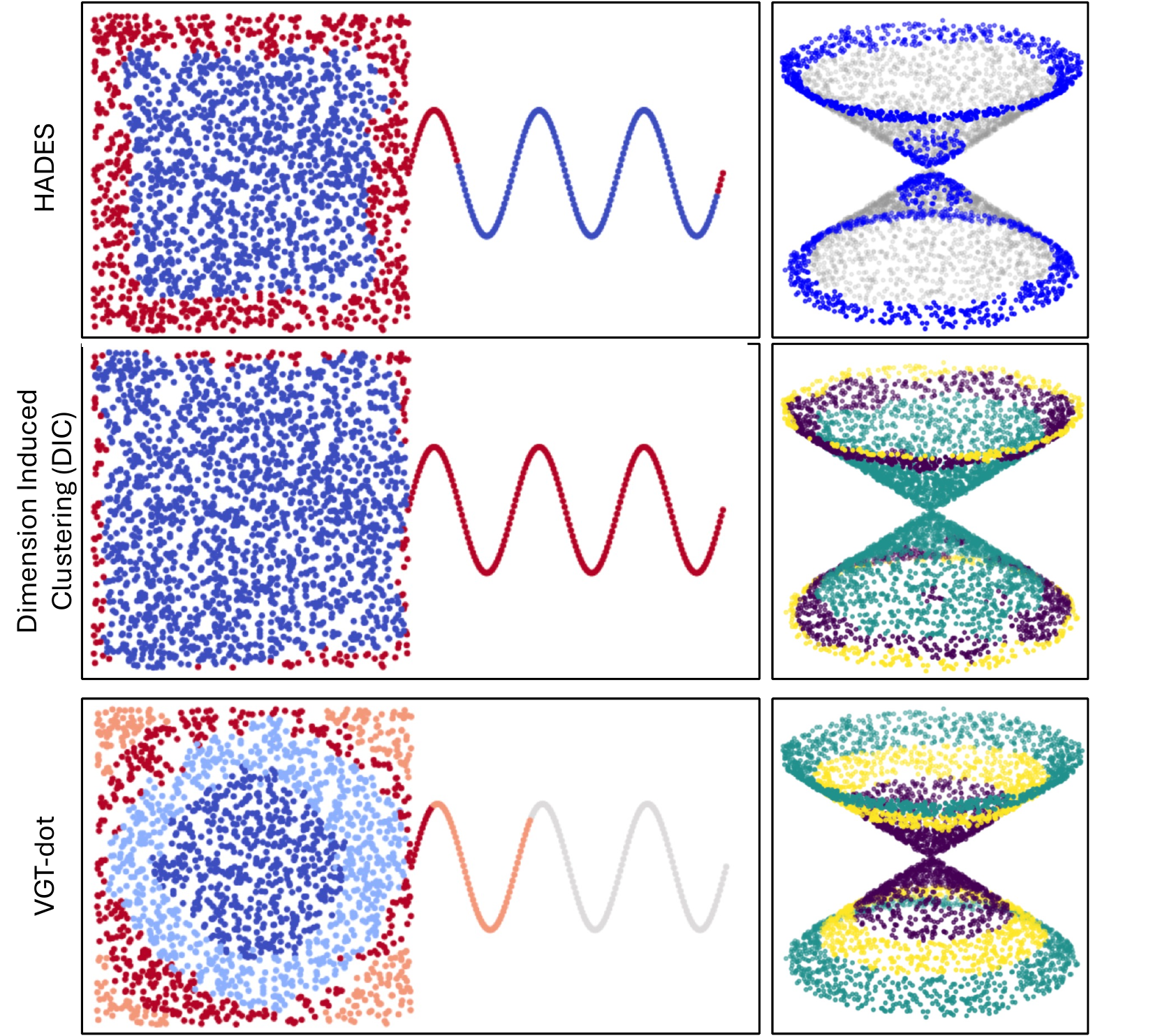}
    \caption{Stratification detection via HADES~\cite{lim2025hades} (top) and stratification signatures via Dimension Induced Clustering (middle) (DIC)~\cite{gionis2005dimension} and VGT-dot (bottom). The colorization of the middle and bottom figures is obtained by using Kmeans clustering on the DIC and VGT-dot feature, respectively.}
    \label{fig:DIC_HADES_VGT-dot}
\end{figure}

\section{Synthesizing Stratification Theory with STL}
\label{sec:STL_RL}

In this section, we expand on the discussion of volume growth laws from Section \ref{subsec:stratification_dectection} and suggest a practical implementation of the ``cartoon'' stratification for trajectory space suggested in Section \ref{subsec:strat_traj}. 
This then forms the theoretical backbone for extending Section \ref{subsec:digital-strats} to Section \ref{sec:Stratification_DRL}.

\subsection{Tubular Neighborhoods and their Growth Laws}
\label{subsec:tubular}

As Figure~\ref{fig:tube} illustrates, even a simple stratified space, like the unit interval embedded in the plane, can exhibit a mixed volume growth law, sometimes called a Steiner polynomial or Weyl Tube Formula\cite{katsnelson2009h}. 
In our example the volume of the tube of radius $r$ is given by $Vol(T_r):=2rL+\pi r^2.$

\begin{figure}[h]
    \centering
    \includegraphics[width=0.5\columnwidth]{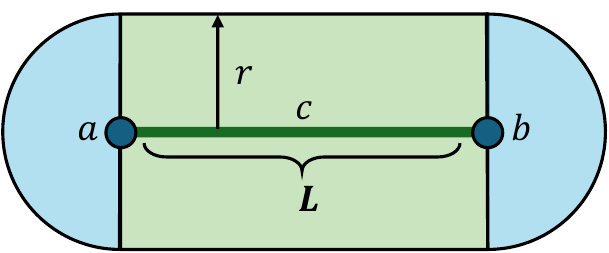}
    \caption{Tubular Neighborhood around a closed interval.}
    \label{fig:tube}
\end{figure}

Interestingly, the lower dimensional strata $a$ and $b$ exhibit a faster growth law, as their volume grows according to the \emph{codimension} inside the ambient space.
This suggests that perhaps one could design a numerical method for isolating strata using the VGT after first eliminating lower-dimensional sub-strata.
First, we formalize the existence of tubular neighborhoods in the following lemma.

\begin{lem}[Existence of Tubular Neighborhoods\cite{durfee1983}]\label{lem:tubes-exist}
    Suppose $S$ is a compact semi-algebraic subset of $\R^n$, i.e.,
    it is cut out by polynomial inequalities.
    Such a set $S$ admits an algebraic tubular neighborhood $T_r\supset S$, i.e., there is a semi-algebraic function $g: \R^n \to \R$ such that $g^{-1}(0)=S$ and $g^{-1}[0,r]=T$ and $T$ deformation retracts onto $S$.
\end{lem}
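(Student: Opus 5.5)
Since this is Durfee's result on rug functions, I would follow the construction in \cite{durfee1983}. The argument has three steps: build a semi-algebraic function vanishing exactly on $S$, control its small sublevel sets, and then produce the retraction.

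\emph{Step 1: a rug function.} Write $S=\bigcup_{i}\bigcap_j\{f_{ij}\ge 0\}$ or $\{f_{ij}>0\}$. Because $S$ is compact, it is closed, so it has a description using only closed polynomial conditions. The most direct choice is the distance function $g(x)=d(x,S)^2$. This function is semi-algebraic by Tarski--Seidenberg: its graph is defined by a first-order formula over $\R$. It is continuous, nonnegative, and satisfies $g^{-1}(0)=S$ because $S$ is closed. If a polynomial-like rug function is preferred, one can instead take $g=\sum_{i,j}\max(0,-f_{ij})^2$ on each closed piece and combine the pieces by products. Either choice gives a proper, nonnegative, semi-algebraic $g$ with zero set exactly $S$.

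\emph{Step 2: small sublevel sets are tubes.} I would set $T=g^{-1}[0,r]$ for small $r>0$. Compactness of $S$, together with $g\to\infty$ (or at least $g$ bounded below away from a neighborhood of $S$), makes $T$ a compact neighborhood of $S$. The key input is the finiteness of critical values of semi-algebraic functions. Choose a Whitney stratification of $\R^n$ that is compatible with $S$ and with $g$; it exists by semi-algebraic stratification theory. By the semi-algebraic Sard theorem, $g$ restricted to each stratum of $\R^n\setminus S$ has finitely many critical values. The curve selection lemma then shows that $0$ is not an accumulation point of critical values. Hence there is $r_0>0$ such that $g$ has no stratified critical points in $g^{-1}(0,r_0]$.

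\emph{Step 3: the deformation retraction.} For $0<r<r_0$, I would integrate a controlled vector field, built from the gradient-like stratified field of $-g$ normalized so that $dg(v)=-1$. This flow pushes $T\setminus S$ down toward $S$ and gives a deformation retraction of $g^{-1}[0,r]$ onto $g^{-1}[0,\epsilon]$ for every $\epsilon$. To pass to the limit $\epsilon\to 0$ and land on $S$ itself, I would use the Łojasiewicz inequality $|\nabla g|\ge c\,g^{\theta}$ near $S$ with $\theta<1$. This bound gives finite length of the flow lines, so the flow extends continuously to time reaching $S$. An alternative route is to take a semi-algebraic triangulation of $T$ with $S$ a subcomplex, which gives a regular neighborhood retraction.

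The main obstacle is Step 3, specifically the continuity of the limiting retraction at $S$. My first choice there is the Łojasiewicz argument. If the stratified flow gets too technical, I would fall back on the triangulation proof, since for compact semi-algebraic sets it yields the retraction combinatorially.
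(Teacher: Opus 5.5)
Your outline (rug function, finitely many critical values by semialgebraic Sard, a flow pushing $T$ down, then a limit onto $S$) is the route the paper attributes to Durfee. The paper itself gives only that citation and the remark that $T_r$ is good for $r$ below the first critical value. The gap is in Step~3, for the function you actually chose. When $S$ is singular, $g=d(\cdot,S)^2$ is not differentiable at points arbitrarily close to $S$. If $S$ is two segments crossing at the origin, then near the origin $g=\min(x_1^2,x_2^2)$, which is not differentiable along the diagonals, and the diagonals run into $S$. A diagonal point has two nearest points, so ``descent toward $S$'' is not even single-valued there. Hence $\nabla g$, the inequality $|\nabla g|\ge c\,g^{\theta}$, and the gradient flow you want to integrate do not exist. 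Replacing them by a stratified controlled vector field rescues Step~2: apply Thom's first isotopy lemma to the projection $(x,t)\mapsto t$ on a Whitney stratification of the graph of $g$. It does not rescue Step~3. A controlled lift of $-\partial_t$ has no speed bound of the form $1/|\nabla_X g|$ on a stratum $X$, so a \L{}ojasiewicz estimate says nothing about the length of its trajectories. The continuity of the limit map at $S$, which you correctly flagged as the crux, is therefore left unproved. This is also why the paper suggests $d(S,x)^2$ only when $S$ is a smooth submanifold; the critical-value statement it quotes presupposes a differentiable rug function.

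There are two clean repairs. First, use a differentiable rug function. Writing $S=\bigcup_i\bigcap_j\{f_{ij}\ge 0\}$ inside an open ball of radius $R$, the function $g=\prod_i\sum_j\min(f_{ij},0)^4+\max(|x|^2-R^2,0)^4$ is $C^3$, semialgebraic, proper, and has $g^{-1}(0)=S$. The properness term matters: your product of terms $\max(0,-f_{ij})^2$ need not be proper. For example, $f=-(x^2+(xy-1)^2)^2(x^2+y^2-1)$ cuts out the closed unit disk, yet $\max(0,-f)^2\to 0$ along $xy=1$, so every sublevel set is unbounded. With the corrected $g$, the field $-\nabla g/|\nabla g|^2$ is a genuine $C^1$ field on $g^{-1}(0,r_0]$. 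The semialgebraic \L{}ojasiewicz inequality holds uniformly near the compact set $S$, so trajectories starting at level $\epsilon$ have length at most $C\epsilon^{1-\theta}$. The endpoint map is then a uniform limit of continuous maps, and your Step~3 goes through. Second, if you keep $d^2$, the triangulation fallback must be composed with Step~2 rather than replace it. A triangulation of $T$ with $S$ a subcomplex only shows that a regular neighborhood $N\subset T$ of $S$ retracts onto $S$; it says nothing about $T$ itself. You must first deform $T$ into $g^{-1}[0,\epsilon]\subset N$ using the absence of critical values, and then apply the retraction of $N$. Alternatively, invoke Durfee's result that rug neighborhoods are isotopic to regular neighborhoods.
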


Lemma \ref{lem:tubes-exist} was first proved by \cite{durfee1983} by generalizing Milnor's ``rug function'' technique from algebraic varieties to the compact semi-algebraic setting.
Later work \cite{dutertre2009} generalized this result further to the closed, but possibly non-compact setting. 
A key ingredient in the statement of \cite[Lem.~1.4]{durfee1983} is that the defining rug function $g:\R^n\to\R$ for $T$ has finitely many critical values. As long as $r<\delta_1$, where $\delta_1$ is the first critical value of $g$, then $T_r$ is a good tubular neighborhood for $S$.
For concreteness, when $S$ is a smooth submanifold of $\R^n$, one can take
$g_S(x)$ to be the squared distance from $S$, i.e., $d(S,x)^2$.
The first critical value $\delta_1$ is the \define{reach} of the subspace $S$, which is finite in the following example.

\begin{ex}\label{ex:circle}
    Consider $S=\mathbb{S}^1\subseteq \R^2$ the unit circle embedded in the plane. The first, and only, critical value for $g_S$ is at $\delta_1=1$, where the thickened circle eventually includes the origin. This example illustrates how curvature in the submanifold leads to finite reach.
\end{ex}

\subsection{STL Basics}
\label{subsec:STL-basics}

\define{Signal Temporal Logic (STL)} \cite{maler2004monitoring} is a logic adapted to the study of continuous, time-varying signals.
For us, these signals arise from studying trajectories $\gamma:[0,T]\to X\subseteq \R^n$ and functions $f:\R^n\to\R$.
Atomic predicates, $\alpha_f$, for STL are statements of the form $ f(x)\geq \mu$ or $f(x)\leq \mu$ for some continuous function $f:X\to \R$.
One of the key features of STL is that it uses the value $|f(x)-\mu|$ to assign a \define{robustness score} to $\alpha_f$ at the point $x$.
Integrating this over a trajectory $\gamma$ then assigns an \define{STL score} to any STL formula.
Formally, the robustness function $\rho$ is defined recursively by first specifying values on atomic predicates.
Formally, $\rho(\gamma,\alpha_f,t)=f(\gamma(t))-\mu$ if $\alpha_f$ is the atomic predicate $f(x)\geq \mu$ and $\rho(\gamma,\alpha_f,t)=\mu -f(\gamma(t))$ if $\alpha_f$ is the atomic predicate $f(x)\leq \mu$.
From there one then uses minus signs, min, and/or max to specify the robustness value associated to $\neg$ (logical negation), $\wedge$ (logical AND), or $\vee$ (logical OR).
The modal \textbf{until} operator $\Until_{I}$, written $\phi_1 \Until_{I} \phi_2$, is true at time $t$, if there exists some $t'\in t + I$ such that $\phi_2$ is true at time $t'$ and $\phi_1$ is true \emph{for all} times $t''\in [t,t']$. 
From until we can then define \define{eventually true} $\Event_{I} \phi \equiv \True \Until_{I} \phi$, which is true at time $t$ if $\exists t'\in t + I$ where $\phi$ is true at $t'$.
Finally, one can define \define{always true}
$\Always_{I} \phi \equiv \notltl \Event_{I} \notltl \phi$, which is true if $\phi$ is true throughout the next interval $I$.

\subsection{An STL Formulation for Stratified Spaces}
\label{subsec:STL-for-Strat}


We now demonstrate how to theoretically define atomic predicates associated to strata in a semi-algebraically stratified space.
These predicates essentially state that we are within some maximal tubular neighborhood of a stratum.
We conclude this section by showing how one can combine these predicates to specify formulae for deciding membership in an abstractly defined stratified space.

\begin{defn}
    Fix a bounding box $\mathbb{B}^n\subset \R^n$ with non-empty interior and a compact semi-algebraic set $S\subseteq \mathbb{B}$ of dimension less than $n$. 
    Let $T_{max}$ be a maximal semi-algebraic neighborhood of $S$, which is further contained in $\mathbb{B}^n$.
    Let $\mu_S=d(S,\partial T_{max})$, which is either the reach of $S$ or the minimum distance from $S$ to the boundary of the box $\partial \mathbb{B}^n$.
    In either case, define
    $\alpha_S \colon d(S,x) \leq \mu_S$
    to be the \define{close to $S$ predicate}.
    This is the maximally robust predicate for determining membership in the subspace $S$.
    Dually, define the \define{away from $S$ predicate} to be
    $\beta_S \colon d(S,x)\geq 0$.
\end{defn}

\begin{thm}\label{thm:strat-decision}
    Suppose $X\subseteq \mathbb{B}^n$ is a semi-algebraic subset, stratified into semi-algebraic subsets $\{S_i\}_{i\leq P}$ satisfying the axiom of the frontier, i.e., that $S_i\cap \overline{S_j}\neq \varnothing$ implies $S_i\subseteq \overline{S_j}$, which defines the indexing poset $P$.
    The areas of satisfaction for each close-to-$S_i$ predicate $\alpha_i$ form a poset, isomorphic to $P^{op}$.
    Moreover, we can decide membership in a fixed stratum $S_i$ by evaluating $\alpha_i \wedge \left(\wedge_{j< i} \beta_{S_j} \right)$ assuming the frontier strata $S_j< S_i$ are known.
\end{thm}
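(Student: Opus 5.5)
The plan is to reduce both claims to statements about the closed sets $\overline{S_i}$. Everything else should then follow from the axiom of the frontier and the elementary properties of the distance function.

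First I will record the basic facts. The function $x\mapsto d(S_i,x)$ equals $d(\overline{S_i},x)$, so it is continuous and semi-algebraic, and its zero set is exactly $\overline{S_i}$. By the axiom of the frontier, $\overline{S_i}$ is the disjoint union of $S_i$ and the strata $S_j$ with $j<i$. Hence $j\le i$ if and only if $\overline{S_j}\subseteq \overline{S_i}$, so the family $\{\overline{S_i}\}$ ordered by inclusion is a copy of $P$. By Lemma~\ref{lem:tubes-exist}, applied to the compact set $\overline{S_i}\subseteq\mathbb{B}^n$, each satisfaction region
\[
A_i:=\{x\in\mathbb{B}^n \mid d(S_i,x)\le \mu_i\}
\]
is a tubular neighborhood that deformation retracts onto $\overline{S_i}$. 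It is therefore determined by its core, namely the locus where $\alpha_i$ attains its maximal robustness $\rho=\mu_i$.

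For the poset claim, I will order the regions $A_i$ by the relation ``$A_i\preceq A_j$ iff the core of $A_i$ contains the core of $A_j$'', i.e.\ by reverse inclusion of the cores $\overline{S_i}$. This is the same convention of ordering closed sets by reverse inclusion that was used for $\mathcal{Z}=\mathbf{Closed}(\R)^{op}$ in Theorem~\ref{thm:traj-strat}. With this convention the map $i\mapsto A_i$ is order-reversing and bijective onto its image, which gives $P^{op}$. Injectivity follows because distinct strata are disjoint and nonempty, so their closures differ.

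The main obstacle is that the literal inclusion order on the thickened regions $A_i$ need not reflect $P$. The radii $\mu_i$ vary: a stratum of small reach can have a thin tube, which may fail to contain the tube of a frontier stratum. So the isomorphism has to be stated through the cores or the robustness levels, not through raw set inclusion of the tubes. My first attempt will be to prove that whenever $\overline{S_j}\subseteq\overline{S_i}$, the set $\{\rho(\alpha_j)=\mu_j\}$ is contained in $\{\rho(\alpha_i)=\mu_i\}$. This inclusion is immediate from the characterization of the cores above, and it is what yields the isomorphism.

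For the membership claim, I will read $\alpha_i$ at its maximally robust value, i.e.\ $d(S_i,x)=0$, which means $x\in\overline{S_i}$. I will read $\beta_{S_j}$ as strictly satisfied, i.e.\ with positive robustness $d(S_j,x)>0$. With the literal non-strict reading $d(S_j,x)\ge 0$ the predicate $\beta_{S_j}$ is vacuous, so this strictness must be imposed. Then
\[
\alpha_i\wedge\Big(\bigwedge_{j<i}\beta_{S_j}\Big)\ \text{holds at } x \iff x\in\overline{S_i}\setminus\bigcup_{j<i}\overline{S_j}=S_i ,
\]
and the last equality is again the frontier decomposition of $\overline{S_i}$ noted above. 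Since only finitely many strata lie below $S_i$, the conjunction is finite, and the decision procedure requires knowing the frontier strata $S_j<S_i$, as the statement assumes.
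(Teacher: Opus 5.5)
The paper gives no proof of Theorem~\ref{thm:strat-decision}; the statement is followed directly by Section~\ref{sec:Stratification_DRL}. So there is no argument of the authors' to compare yours with. You are in effect supplying the missing proof, and your main moves are the right ones.

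Read literally, neither claim holds in general, so your reinterpretations are forced rather than merely convenient.

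For the first claim, inclusion of the tubes $A_i$ can realize $P$, $P^{op}$, or neither, depending on the radii. In a tight box, two incomparable strata can even have identical tubes: the unit circle and its center in $[-2,2]^2$ both give the closed disk of radius $2$. So the order must be carried by the predicates, through their maximal-robustness loci $\{\rho(\alpha_i)=\mu_i\}=\overline{S_i}$. Your phrase ``determined by its core'' points the wrong way. What you need is that the core is recoverable, and it is recoverable from $\rho$ but not from the bare set $A_i$. With that, the first claim is just the order-embedding $j\le i\iff\overline{S_j}\subseteq\overline{S_i}$ plus the reverse-inclusion convention. That is correct, but it is a relabeling rather than a property of the regions.

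For the second claim, you should say why $\alpha_i$ cannot be read literally. If $S_k>S_i$, then $S_k$ accumulates on $S_i$. When $\mu_i>0$, a point of $S_k$ within $\mu_i$ of $S_i$ lies in $A_i$. It also avoids every $\overline{S_j}$ with $j<i$, since otherwise the frontier axiom gives $k\le j<i<k$. So the literal conjunction accepts points of higher strata.

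Three smaller slips need repair.

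First, the frontier axiom gives only $\overline{S_i}\cap X=\bigsqcup_{j\le i}S_j$. Your identity $\overline{S_i}=S_i\sqcup\bigsqcup_{j<i}S_j$ needs $X$ closed; otherwise the test must be restricted to $x\in X$. For the single stratum $X=(0,1)\times\{0\}\subset\R^2$, your test accepts the two endpoints.

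Second, ``distinct strata are disjoint and nonempty, so their closures differ'' is not a valid reason by itself. Injectivity follows instead from antisymmetry of $P$: if $\overline{S_i}=\overline{S_j}$ then $i\le j\le i$. Alternatively, use the semi-algebraic fact $\dim(\overline{S}\setminus S)<\dim S$.

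Third, the assertion that $A_i$ deformation retracts onto $\overline{S_i}$ is false when $\mu_i$ equals the reach. In Example~\ref{ex:circle}, the closed tube of radius $1$ is the full disk of radius $2$. Lemma~\ref{lem:tubes-exist} only yields good tubes strictly below the first critical value, and for a rug function rather than the distance. You never use this retraction, so simply delete it.

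With these repairs, the argument is complete for the statement as you reinterpret it.
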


\section{Stratification of a DRL Game}
\label{sec:Stratification_DRL}


\subsection{STL-Reward Based Game}

We study a Minigrid game whose state is $s=(x,y,\theta,t)\in[0,84]\times[0,84]\times\{0,\pm45,90\}\times[0,194]$, encoding position, heading, and time.  
The discrete action set consists of the four cardinal, four inter-cardinal directions, and a ``no‑action'' command.  
Using the MemoryGym environment~\cite{pleines2023memory}, we define two games in which the reward is the normalized robustness (in $[-1,1]$) of an STL formula, Figure~\ref{fig:stl_game_env}.  
Because STL robustness can only be evaluated on a complete trajectory, the agent receives a delayed reward at the end of each episode ($t=195$).

\begin{figure}[t!]
    \centering
    \includegraphics[width=0.7\columnwidth]{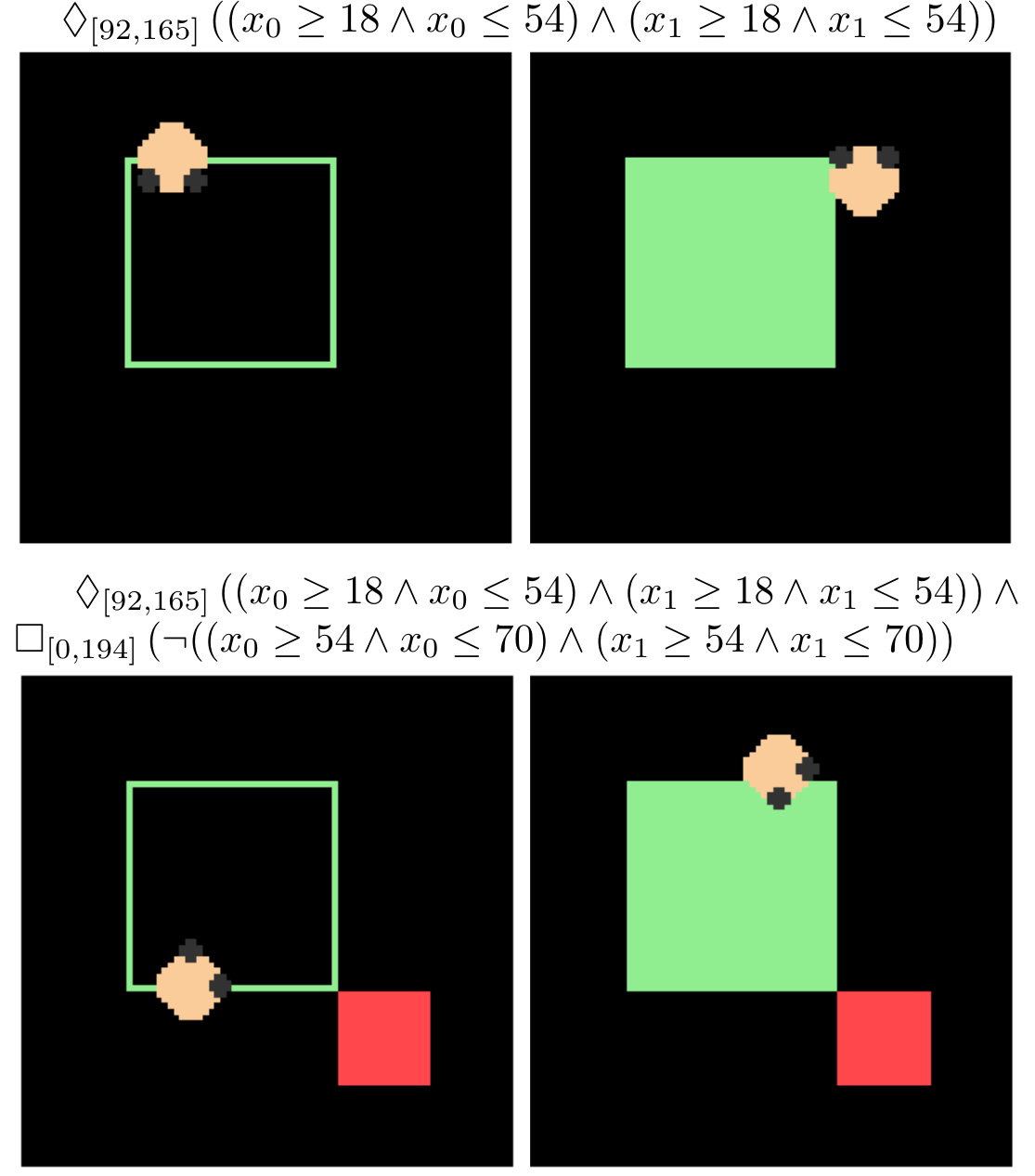}
    \caption{Screenshots of the environment and the observation used as an input to the transformer network. The light orange avatar represents the agent. The left images show the empty square occurring before or after the ``eventually'' operator is active and the right images with the filled square showing the instances when the temporal operator is active. The red square on the bottom is an area that must be avoided to get high reward.}
    \label{fig:stl_game_env}
\end{figure}

Our policy is learned via a Transformer‑XL model trained using PPO as described in~\cite{memory_gym_github}.  
Each token comprises the current observation and the 10 most recent frames, two shown in Figure~\ref{fig:stl_game_env}.  
A pretrained and fixed image‑processing backbone extracts visual features, which are passed through a learnable token‑embedding layer, two transformer blocks, and a fully‑connected head that maps the latent representation to an action via soft-max.  
Time is not supplied explicitly to the agent, but a square in the image/observation remains empty until the temporal operator becomes active, when it is filled.  
The left images of Figure~\ref{fig:stl_game_env} show observation before (and after) the operator $\lozenge_{[92,165]}$ is active, and the right images depict the environment once the operator is active.

We examine the stratification of the token embedding space corresponding to the space where the transformer maps visual features to a useful representation.  
We focus on this initial layer because early embeddings largely govern the network's subsequent actions. Unlike standard vision transformers that attend to localized image patches, each token here corresponds to the full image.  
In other words, each point in Figure~\ref{fig:UMAP_VT-dot_eventually} represents an entire screenshot, i.e., a state of the MDP. 
Similar to language models, the transformer core can be viewed as a next-image predictor, with a final fully-connected head that translates that prediction into a reward-maximizing action. 

\subsection{Stratification of STL-Based DRL}

After training the DRL agent to satisfactory performance, we sample 250 trajectories from random initial states.  Each trajectory consists of 194 steps, yielding $250\times 194 = 48,500$ token embeddings of dimension 256.  The set of unique embeddings is much smaller: roughly 7.6k distinct vectors for the “eventually” game and about 12k for the ``eventually + always not'' game.  To probe the stratification of this token embedding space we apply the techniques described in Section~\ref{subsec:stratification_dectection}.  The embeddings are projected into 3D with UMAP~\cite{mcinnes2018umap} or ISOMAP~\cite{tenenbaum2000global}, and the resulting visualizations highlight the features and metrics of interest.

\subsubsection{``Eventually in the green square''}

\begin{figure}[t!]
    \centering
    \includegraphics[width=0.90\columnwidth]{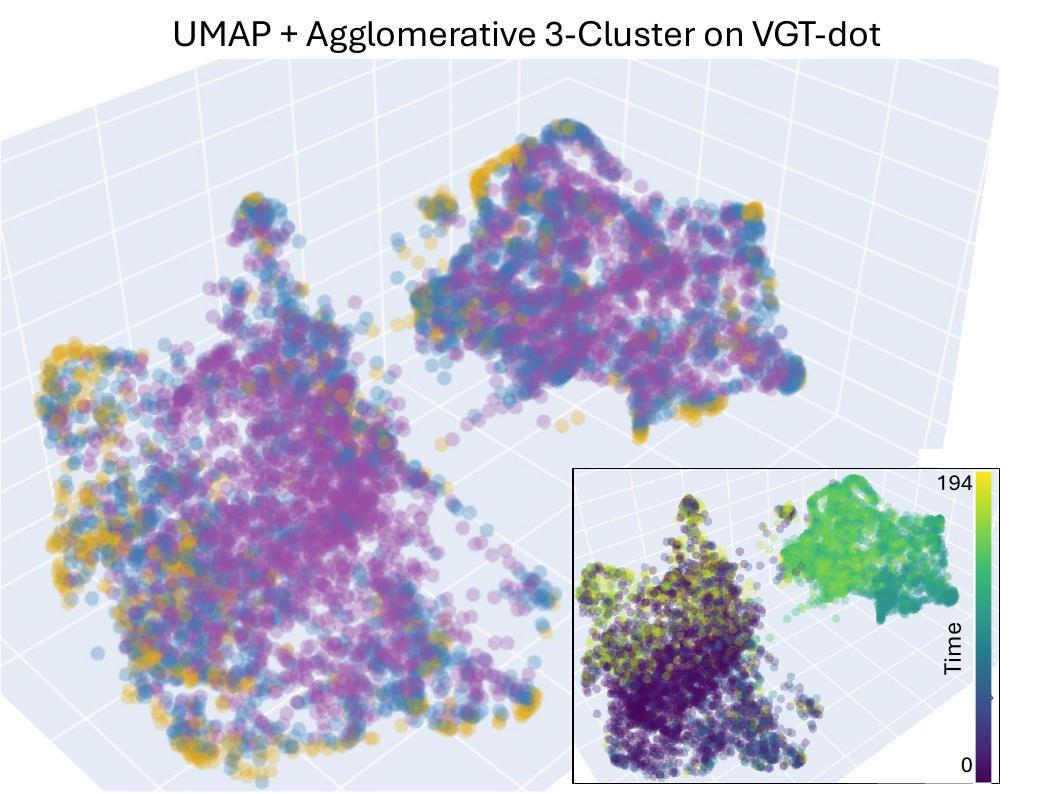}
    \caption{Main panel: UMAP embedding of the token vectors, colored by the three clusters obtained with agglomerative clustering using VGT‑dot as the feature.  The colors trace a pattern that resembles the ``hourglass'' structure shown at the bottom of Figure~\ref{fig:DIC_HADES_VGT-dot} (the correspondence is slightly blurred by embedding noise).  Inset: the same UMAP projection, now colored by time (0–194).  The right‑hand point cloud corresponds to timestamps when the ``eventually'' operator is active, when the green square is filled. }
    \label{fig:UMAP_VT-dot_eventually}
\end{figure}

The UMAP projection of the token space, Figure~\ref{fig:UMAP_VT-dot_eventually}, reveals two clearly separated point clouds.  
The left cloud contains tokens from observations in which the square is empty, while the right cloud groups tokens from observations where the square is filled.  An inset zooms in on this separation by coloring points according to time: dark and yellow hues correspond to the periods \emph{before} and \emph{after} the square becomes filled, and a lighter‑green hue marks the tokens generated while the square is filled.  
The existence of two distinct clouds is not surprising, since the presence or absence of the square provides a strong discriminative cue.  Moreover, because the transition from empty to filled to empty follows the temporal evolution of the episode, the UMAP embedding naturally captures an hourglass-like shape akin to the one in Section~\ref{subsec:stratification_dectection} (plotted here horizontally). 
The colors used for the large UMAP embedding is the agglomerative clustering (for three clusters) of the VGT-dot of the token space. While the embedding is noisier than the hourglass example given that samples are sparser, we still see symmetric hues about the ``neck'' going from purple to blue to yellow as we move from the neck to the periphery of the point clouds. 

To apply HADES to this point cloud we first reduced the 256‑dimensional token embeddings to 100 dimensions using a discrete cosine transform (DCT) matrix, which serves as a random projection that preserves the underlying geometry~\cite{bingham2001random}.  
The resulting 3D UMAP projection of the 100D space in Figure~\ref{fig:UMAP-HADES} retains the main structure observed in Figure~\ref{fig:UMAP_VT-dot_eventually}, except the point cloud is now shown vertically.

Points colored purple in Figure~\ref{fig:UMAP-HADES} are those that HADES flags as failing the manifold test with some statistical certainty; see \cite{lim2025hades} for more details.  
These points lie at the neck of the hourglass, matching the VGT‑dot pattern displayed in Figure~\ref{fig:DIC_HADES_VGT-dot}.  
The correspondence between the VGT‑dot signature in the original 256D token space and the HADES classification in the reduced 100D space provide strong evidence of stratifications within both token representations. 
HADES also identifies non-manifold points in the upper portion of the top point cloud, Figure~\ref{fig:UMAP-HADES}.  
These points tend to belong to the later stages of trajectories, possibly indicating boundary points.
Recall that a manifold with boundary is technically not a manifold, as boundary points have a lower dimension.

\begin{figure}[t!]
    \centering
    \includegraphics[width=0.63\columnwidth]{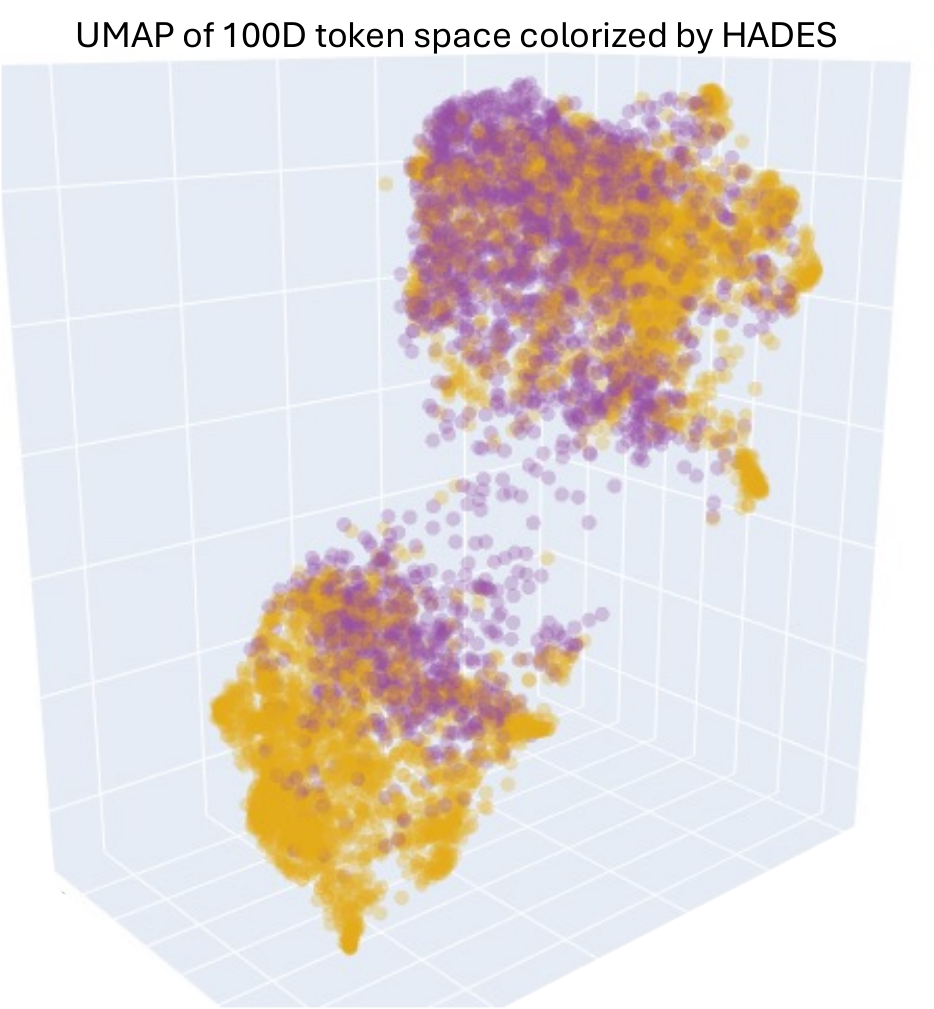}
    \caption{UMAP projection to 3D from a 100D DCT projection of the original 256D token space. Note that the point cloud retains the geometric ``hourglass'' structure from Figure~\ref{fig:UMAP_VT-dot_eventually}, whose idealized form was studied in the right column of Figure~\ref{fig:DIC_HADES_VGT-dot}. Purple points are those failing the manifold hypothesis, rejected with statistical certainty by HADES. Note how they concentrate near the ``choke point'' in a noisy ``hourglass.''}
    \label{fig:UMAP-HADES}
\end{figure}

\subsubsection{``Eventually in the green square and always not in the red square''}

Projecting our 256D token space to 3D again yields two distinct clusters, Figure~\ref{fig:umap_isomap_vgt-dot_always} (left), mirroring the pattern observed in the previous ``eventually'' STL game.  
Coloring the UMAP embedding by VGT-dot cluster values reveals the same structure as before: an inner magenta core cluster expands outward to a yellow boundary.  
Because the number of tokens is substantially larger, the HADES algorithm exceeds available memory, even after applying DCT to 100D; so we omit HADES for this dataset. 
Instead, we present the VGT‑dot analysis on the ISOMAP embedding, Figure~\ref{fig:umap_isomap_vgt-dot_always} (right).  
The dominant visual cue, the filled versus empty square, produces two well‑defined lobes as before.  
Clustering based on the VGT‑dot feature again exhibits the symmetric hourglass-like pattern identified previously.  
Since we do not have HADES providing statistical certainty for non-manifold structure, our claim of a stratified space structure here is weaker.
However, the evident ``hourglass'', viewed across multiple methods, coheres with our interpretation that the agent must navigate through a distinguished pinch point in order to successfully win the game.

\begin{figure}[t!]
    \centering
    \includegraphics[width=\columnwidth]{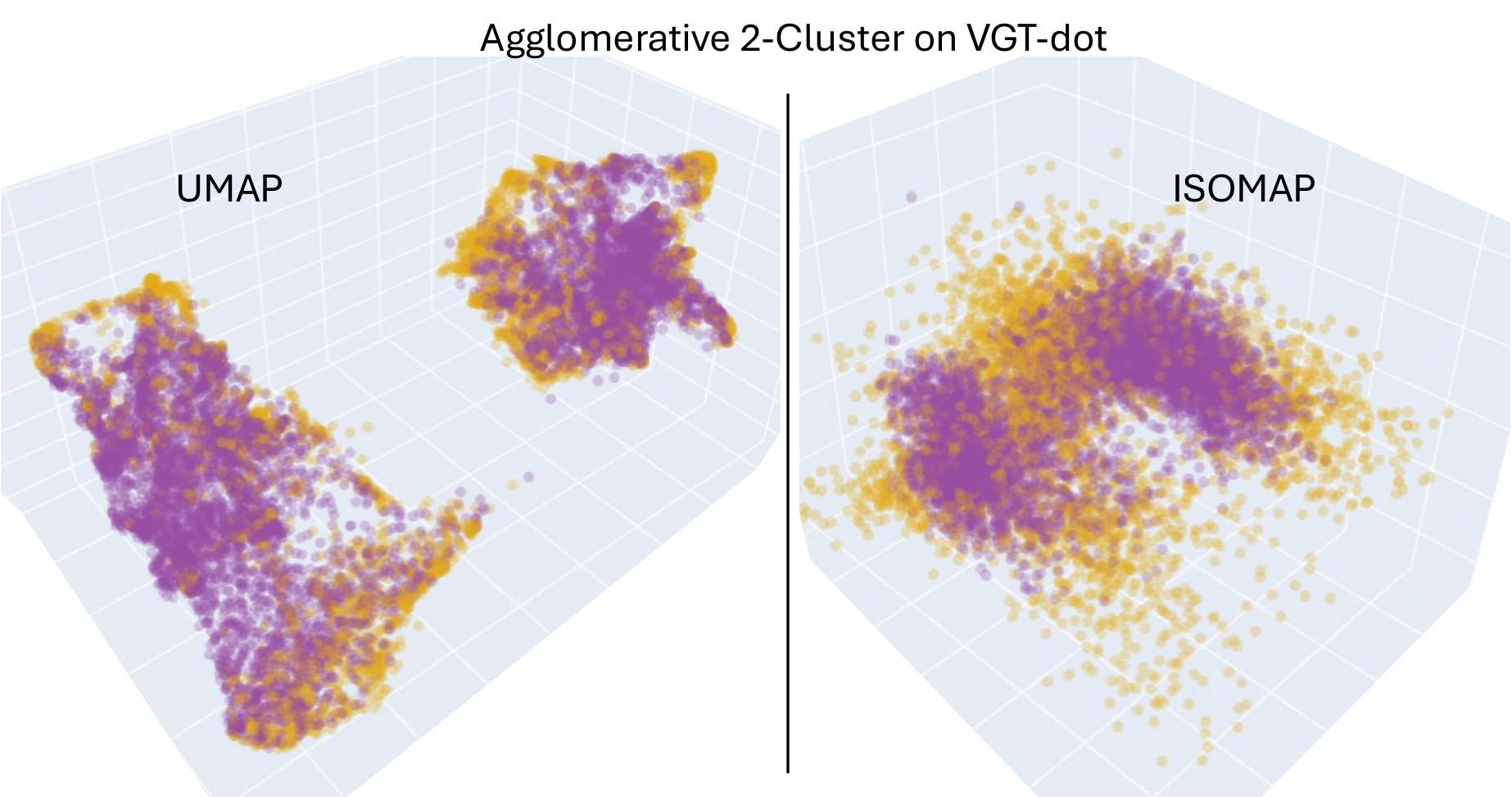}
    \caption{Left: UMAP embedding of the token space.  Right: ISOMAP embedding of the token space. Both projections are colored by agglomerative clustering (two clusters) using the VGT‑dot as the feature.  The resulting pattern mirrors that of the ``eventually'' game and the hourglass space, with a symmetric core cluster expanding into an outer cluster.}
    \label{fig:umap_isomap_vgt-dot_always}
\end{figure}

\section{Conclusions}

In this paper, we have established a novel poset-based stratification theory for modeling Minigrid games, along with their latent (transformer-based) representations.  
We introduced a new class of atomic predicates for STL formulae, treating them as defining strata in the underlying space–time manifold and used robustness as a reward function for training a DRL agent.   
Preliminary numerical experiments on 256D token spaces, using a new VGT-dot technique, support our stratification hypothesis and coheres with other methods, such as DIC and HADES.
However, additional research is required to develop more robust tubular‑neighborhood based volume growth laws to automate extraction of strata.




\section{Acknowledgements}
We would like to thank Ngoc B. Lam at Lockheed Martin for setting up the simulation environment with STL-based reward, and  Brennan Lagasse who, during his intership at Lockheed Martin, adapted the environment from MemoryGym and developed the scripts to save token embeddings.

\bibliographystyle{ieeetr}
\bibliography{references}

\end{document}